\documentclass[twocolumn]{article}
\usepackage[twocolumn,textwidth=16.8cm,columnsep=.9cm]{geometry}
\addtolength{\topmargin}{-.5in}
\addtolength{\textheight}{1.75in}
\usepackage{sectsty}
\sectionfont{\fontsize{12}{15}\selectfont}
\subsectionfont{\fontsize{10}{13}\selectfont}

\usepackage{microtype}
\usepackage{graphicx}
\usepackage{subfigure}
\usepackage{booktabs} 

\usepackage{amssymb,amsmath,amsthm}
\usepackage{empheq}
\usepackage{algorithm,algorithmic}
\usepackage{microtype}
\usepackage{graphicx,wrapfig}
\usepackage{booktabs}
\usepackage{hyperref}

\newcommand{\hor}{H}

\def\A{{\mathcal A}}


\newcommand{\ignore}[1]{}

\def\bold0{\mathbf{0}}







\def\epsilon{\varepsilon}

\newcommand{\defeq}{\triangleq}


%

\newtheorem{theorem}{Theorem}[section]

\newtheorem{definition}[theorem]{Definition}

\newtheorem{assumption}[theorem]{Assumption}

\newcommand{\newreptheorem}[2]{%
\newenvironment{rep#1}[1]{%
 \def\rep@title{#2 \ref{##1}}%
 \begin{rep@theorem}}%
 {\end{rep@theorem}}}

\newreptheorem{theorem}{Theorem}
\newreptheorem{lemma}{Lemma}
\newreptheorem{proposition}{Proposition}
\newreptheorem{claim}{Claim}
\newreptheorem{corollary}{Corollary}
\newreptheorem{mainlemma}{Main Lemma}


\newcommand{\namedref}[2]{\mbox{\hyperref[#2]{#1~\ref*{#2}}}}

\newcommand{\figurerefb}[2]{\mbox{\hyperref[#1]{Figure~\ref*{#1}#2}}}

\newcommand{\equationref}[1]{\mbox{\hyperref[#1]{(\ref*{#1})}}}





\setlength{\columnwidth}{1 in}

\title{\textbf{Boosting for Control of Dynamical Systems}}
\author{
  Naman Agarwal$^{1}$ \and Nataly Brukhim$^{1\,2}$ \and Elad Hazan$^{1\,2}$ \and Zhou Lu$^{2}$\\
  \\
  $^1$ Google AI Princeton \\
  $^2$ Department of Computer Science, Princeton University \\
  \texttt{namanagarwal@google.com}, \texttt{\{nbrukhim,ehazan,zhoul\}@princeton.edu}\\
}
\date{}
\begin{document}

\maketitle


\begin{abstract}
We study the question of how to aggregate controllers for dynamical systems in order to improve their performance. 
To this end, we propose a framework of boosting for online control. 
Our main result is an efficient boosting algorithm that combines weak controllers into a provably more accurate one. 
Empirical evaluation on a host of control settings supports our theoretical findings. 
\end{abstract}
\section{Introduction}

In many learning scenarios it is significantly easier to come up with a mildly accurate rule of thumb than state of the art performance. This motivation led to the development of ensemble methods and boosting \cite{schapire2012boosting}, a theoretically sound methodology to combine rules of thumb (often referred to as weak learners) into a substantially more accurate learner. 

The application of boosting has transformed machine learning across a variety of applications, including supervised learning: classification \cite{Freund:1997:DGO:261540.261549}, regression \cite{mason2000boosting}, online learning \cite{beygelzimer2015optimal,beygelzimer2015online}, agnostic learning \cite{kanade2009potential}, recommendation systems \cite{Freund:2003:EBA:945365.964285} and many more.

While the same motivation for boosting exists for  dynamical systems, i.e. it is often easy to come up with a reasonable predictor or a controller for a dynamical system, the theory and practice of boosting faces significant challenges in these settings due to the existence of a state. Formally, a dynamical system is specified by a rule $x_{t+1} = f(x_t, u_t) + w_t$. The problem of optimal control in dynamical systems requires the design of a sequence of controls $\{u_t\}$ so as to minimize a certain objective (for example making the states follow a certain trajectory). As can be seen readily, the decisions made by a controller affects the future trajectory of the system, and hence it is often not a-priori clear how to obtain a meaningful guarantee when switching between or aggregating different controllers. 

In this paper we propose a framework for formalizing boosting in the context of optimal control of dynamical systems. The first crucial insight comes from the newly emerging literature on non-stochastic control, which allows a non-counterfactual description of the dynamics. 
Then leveraging techniques from online learning with memory \cite{anava2015online} and online gradient boosting \cite{beygelzimer2015optimal}, we provide a boosting algorithm for controlling systems with state and prove appropriate theoretical guarantees on its performance.     

\paragraph{The notion of boosting in online learning.}
Notice that unlike supervised learning, in the case of dynamical systems it is not immediately clear what metric should be used to measure the improvement provided by boosting algorithms. The theory of online gradient boosting \cite{beygelzimer2015optimal} suggests to boost online learning by improving the {\it expressivity} of the predictors. Given a set of weak online learners, online boosting guarantees prediction (as measured by mistake bounds or regret) that is as good as a predictor in a larger class. 

Furthermore, a robust way to study the optimal control problem is to study it in the online non-stochastic setting \cite{agarwal2019online}, where both the objective to be minimized and the perturbations to the system get revealed in an online fashion\footnote{We formally define the notion of non-stochastic control in Section \ref{subsec:NSC}.}.

Motivated by this, we take an online boosting approach to optimal control. Our boosting algorithm when given access to (weak) controllers from a certain class, provides a boosted controller than can provably perform (in terms of regret) as well as a controller from the larger class of a committee (or convex combination) of the weak controllers. Furthermore, we provide an alternate boosting algorithm, which is more efficient in terms of the number of weak controllers required, and allows for the utilization of weak controllers designed for handling only quadratic losses into strong controllers that can compete against the more general class of smooth and strongly-convex losses. 

We provide the relevant background and setup in Section \ref{sec:setting}. We formally describe our algorithms and provide formal guarantees on their performance in Section \ref{sec:boosting_ds}. We conclude with experimental evaluation of our methods on a variety of control and learning tasks in dynamical systems. 

\paragraph{Dynamics with bounded memory.}
For our boosting techniques to induce bounded regret we require that the dynamical systems considered have negligible dependence on history beyond a certain amount of time steps in the past. We quantify this property as $H$-bounded memory and formally define it in Definition \ref{def:stable}. This assumption is analogous to a standard assumption in Reinforcement Learning, called \textit{mixability} of the underlying Markov Decision Process. The so called \textit{episodic} setting in RL is also often used to circumvent long-term dependencies. 

In control theory, the bounded memory assumption manifests in two forms. The stronger notion called \textit{stability}, posits that the system remains bounded over any sequence of actions. This is often considered to be a very strong assumption. A weaker assumption commonly used is \textit{stablizability} or \textit{controllability}, which posits the existence of a controller or a policy which generates stable actions. Upon action with such a controller, the system de-facto exhibits a bounded memory. In this paper, we assume that all policy classes we work with induce a bounded memory on the dynamical system (or analogously are \textit{fast-mixing} or \textit{stabilizing}).

\paragraph{Boosting vs. overparametrization of deep controllers.} As opposed to supervised learning, in control there are many situations in which there is limited availability of training data, if at all. While deep neural networks have proven extremely successful for large data regimes in supervised learning, we experimentally find that for control, boosting small network controllers results in superior performance to training of a large network controller.

\ignore{

\subsection{Related work}

\paragraph{Boosting} The importance of ensemble methods was recognized in the statistics literature for decades \cite{Breiman1996}. The seminal work of Freund and Schapire \cite{Freund:1997:DGO:261540.261549} introduced the theoretical framework of boosting in the context of computational learning theory, and notably the AdaBoost algorithm. We refer the reader to the text \cite{schapire2012boosting} for an in-depth survey of the fundamentals and various applications of Boosting. Of particular interest to this paper is the generalization of boosting to the online setting \cite{beygelzimer2015online} and references therein. 

\paragraph{Online learning with memory} To abstract out a general boosting algorithm for dynamical systems we leverage the online convex optimization with memory framework introduced by \cite{anava2015online}. 

\paragraph{Control for Dynamical Systems}
 For a survey of learning, prediction and control problems arising in linear dynamical systems (LDS), we refer the reader to \cite{ljung1998system}, as well as a recent literature review in \cite{hardt2018gradient}. 
Recently, there has been a renewed interest in learning dynamical systems in the machine learning literature. This includes refined sample complexity bounds for control  \cite{abbasi2011regret,dean2018regret,abbasi2019model}, the new spectral filtering technique for learning and open-loop control of non-observable systems  \cite{hazan2017learning,arora2018towards,hazan2018spectral}, and provable control  \cite{fazel2018global,agarwal2019online,cohen2018online,agarwal2019logarithmic,hazan2019nonstochastic,simchowitz2020improper}.

}

\section{Background and Setup}\label{sec:setting}

Our treatment of boosting applies only to certain dynamical systems and control methods. The main requirement we place is a memory bound on the effect of past actions. This section formally describes the dynamics setting and underlying assumptions for our algorithms to apply. 

\subsection{Non-stochastic Control} \label{subsec:NSC}

A general framework of robust optimal control has emerged recently which rests on analyzing optimal control in an online non-stochastic setting \cite{agarwal2019online,hazan2019nonstochastic,simchowitz2020improper}. In this framework, at each round $t \in [T]$ the controller observes the state of the system $x_t \in \mathbb{R}^k$, and outputs an action $u_t \in \mathcal{U} \subset \mathbb{R}^d$, where $\mathcal{U}$ is a convex bounded set. The adversary then reveals a convex cost function and the loss $c_t(x_t, u_t)$ is incurred by the controller. The system then transitions to a new state $x_{t+1}$ according to the following law with $f$ representing the dynamics of the system, 
\begin{equation}\label{eq:ds}
x_{t+1}=f(x_t,u_t)+w_t,
\end{equation}
where $w_t \in \mathbb{R}^k$ is an adversarially chosen perturbation to the dynamics that the system suffers at each time step. The costs and the perturbations are not known to the controller in advance and are assumed to be revealed to the controller after it has committed to the action $u_t$. The task of the controller is to minimize regret defined in the following way

\[\mathrm{Regret} = \sum_{t} c_t(x_t, u_t) - \min_{\pi \in \Pi} \sum_t c_t(x_t(w_{1:t}, \pi), u_t^{\pi})\]

Here $\Pi$ represents a class of policies that we wish to compare to. Furthermore $x_t(w_{1:t}, \pi)$ is the state that the system would have reached when executing $\pi$ on the perturbed dynamics with the \textit{same} perturbations $\{w_1 \ldots w_t\}$. Observe that this is a counterfactual notion of regret, since the actions performed by the comparator affect the future cost suffered by the comparator.

Note that the assumption of observable perturbation is without loss of generality when the underlying system $f$ is known to the controller and the state is fully observable. We do not make these assumptions in the paper but rather work with the setting of the complete observation of $w$ as in \cite{agarwal2019online}. Furthermore, we make no distributional assumptions on $w_t$ and only assume $\left\lVert w_t \right\rVert_2 \le W$, for some $W >0$.

\paragraph{Perturbation Based Policies}
The reference class of policies $\Pi$ we consider in this paper is comprised of policies $\pi$ that map a sequence of perturbations $w_1 \ldots w_{t-1}$ to an action $u_t$. Note that this class of policies is only more general than the standard notion of policies which map the current $x_t$ to an action $u_t$. In particular,  it captures linear policies for linear dynamical systems (Section \ref{sec:boosting_control}). A crucial property of such policies is that the decisions depend directly on the underlying dynamics and do not depend directly on the control feedback, but only implicitly via the perturbations. 

Another important limitation we place on policies and dynamics in this paper is memory boundedness, as we now define. 
\begin{definition}[$(H,\epsilon)$-Bounded Memory] \label{def:stable}
Given a dynamical system as given in Equation \ref{eq:ds}, for a sequence of actions $u_1, \ldots, u_T$ and any time $t$, let $\hat{x}_t$ be the state reached by the system if we artificially set $x_{t-H+1} = 0$ and simulate the system with the actions $u_{t-H+1}, \ldots ,u_t$. 

The sequence of actions $\{u_1 \ldots u_T\}$ is considered to be of {\bf $(H, \epsilon)$-bounded memory} if for all $t$,
$$|c_t(\hat{x}_t,u_t)- c_t(x_t,u_t) | \le \epsilon.$$
\end{definition}

\begin{assumption}[Bounded Memory of Convex Combinations]\label{ass:convex-stab}
For a given dynamical system the class of $(H,\epsilon)$-memory bounded sequences is closed under convex combination. 
\end{assumption}

Note that the notion of bounded memory is a slightly stronger notion than that of \textit{controllability} in the sense that it is applicable to changing policies as well. This notion is also referred to as sequential strong stability in the work of \cite{cohen2018online}. The key point is that the effect of the distant past (beyond $H$ most recent actions and disturbances) on the current state is negligible. Multiple previous works exhibit policy classes which produce bounded memory actions \cite{cohen2018online, agarwal2019online, pmlr-v97-cohen19b}. Concretely, in Section \ref{sec:boosting_control} we describe the GPC controller from \cite{agarwal2019online} which is shown to be memory bounded as well as satisfy Assumption \ref{ass:convex-stab}. 

\paragraph{Reduction to Bounded Memory Functions}
 For a sequence of actions, we now define a proxy function which only penalizes the last $H$ actions \footnote{Each $\ell_t$ naturally depends on the sequence of $\{w_t:w_{t-H}\}$ chosen by the adversary. We suppress this dependence for notational convenience.}:
\begin{equation}\label{eq:proxy_cost}
    \ell_t(\mathbf{0},u_{t-H+1}, \ldots, u_t) := c_t(\hat{x}_t, u_t)
\end{equation}
$(H,\epsilon)$-bounded memory of actions now ensures that minimizing regret over the proxy costs $\ell_t$, which have finite memory, is sufficient to minimize overall regret. Having reduced the control of dynamical systems to minimizing regret over functions with memory, we are now ready to discuss the technique of Online Boosting, which we will apply on the proxy cost function.

\subsection{Online Boosting}\label{subsec:online_boosting}
The presence of state in non-stochastic control makes online boosting more challenging. We first give a brief background on online boosting for the regression setting \cite{beygelzimer2015online}, and in Section \ref{sec:boosting_ds} discuss a reduction that enables the use of a similar technique for non-stochastic control.

Informally, online boosting refers to a meta-learning algorithm which is given black-box oracle access to an online (\textit{weak}) learning algorithm $\mathcal{A}$ for a function class $\Pi$ and \textit{linear} losses, with regret $R$, and is given a bound $N$ on the total number of calls made in each
iteration to copies of $\mathcal{A}$. The algorithm then obtains an online learning algorithm $\mathcal{A}'$ for a richer function class
$\Pi' = \text{conv}(\Pi)$ (i.e. the convex hull of $\Pi$), and any \textit{convex} losses, with a (possibly larger) regret $R'$.

The online booster maintains $N$ instances of the weak learning algorithm, denoted $\mathcal{A}_1,...,\mathcal{A}_N$. In each round $t \in [T]$, an adversary selects an example $x_t$ from a compact feature space $\mathcal{X}$, and a loss function $\ell_t: \mathcal{X} \rightarrow \mathbb{R}^k$, and presents $x_t$ to the
 learner. The policy regret $R(T)$ of each weak learner $\mathcal{A}_i$ is assumed to be bounded as
$$ 
\sum_{t=1}^T \ell_t(\mathcal{A}_i(x_t)) - \min_{\pi \in \Pi} \sum_{t=1}^T  \ell_t(\pi(x_t)) \le R(T),
$$
with the regret $R(T)$ being a non-decreasing sub-linear function of $T$. Note the slight abuse of notation here; $\mathcal{A}_i(\cdot)$ is not a function but rather the output of the online learning algorithm $\mathcal{A}_i$ computed on the given example using its internal state. In each round $t$. the online booster takes some convex combination of all the predictions made by the learners, and outputs the boosted prediction. To update $\mathcal{A}_1,...,\mathcal{A}_N$  at every iteration $t \in [T]$, the booster passes a carefully chosen loss function to each of the weak learners. Specifically, each learner $\mathcal{A}_i$ is fed with a \textit{residual} loss function $\ell_t^i(y) = \nabla(y_t^{i-1}) \cdot y$, where $y_t^{i-1}$ is a convex combination of previous weak learner predictions, $\mathcal{A}_1(x_t),...,\mathcal{A}_{i-1}(x_t)$.

The work of \cite{beygelzimer2015online} 
proves that this technique results in a regret bound of
$$
\sum_{t=1}^T \ell_t(y_t) - \min_{\pi \in \text{conv}(\Pi)} \sum_{t=1}^T  \ell_t(\pi(x_t)) \le R(T) + O\bigg( \frac{T}{N} \bigg).
$$
where $y_1,...,y_T$ are the predictions outputted by the booster. Note that although the regret of the boosting algorithm is larger by $O(T/N)$ than the regret of the weak learners, it is now achieved against the best predictor available in a richer class. This is especially meaningful when the class of predictors $\Pi$ is e.g., neural networks, a highly non-convex class. Thus, by boosting such predictors, the resulting algorithm is guaranteed to have low regret with respect to the convex hull of  $\Pi$. Our method is based on the online boosting technique, as detailed next. 
\section{Algorithms and Main Results}\label{sec:boosting_ds}

This section describes our algorithms for boosting in dynamical systems. The main idea of our methods is to leverage the memory boundedness and reduce online control of dynamical systems to online learning with finite memory \cite{anava2015online}. We achieve this by constructing a proxy cost function which only takes into account the $H$ most recent rounds of the system (see Equation \ref{eq:proxy_cost}). 
We then extend the online boosting methodology (discussed in Subsection \ref{subsec:online_boosting}) to apply to these losses with memory. Bounded memory ensures that minimizing regret over our constructed proxy costs is sufficient to minimize overall regret.

We propose two algorithms (\ref{alg1}, \ref{alg2}) for boosting online control, given access to \textit{weak} controllers (see definitions below) which obtain low regret against a policy class $\Pi$ and class of losses $\mathcal{L}$. 
For the first algorithm we assume $\mathcal{L}$ to be the class of \textit{linear} losses as detailed in Subsection \ref{subsec:linear_loss_boosting}. For the second algorithm we assume $\mathcal{L}$ to be the class of \textit{quadratic} losses as detailed in Subsection \ref{subsec:linear_loss_boosting}.

\begin{table}[t]
\begin{tabular}{|c|c|c|c|}
\hline
Algorithm &  
Class &
Loss &
Regret\\
\hline
DBoost 1 &  $\text{conv}(\Pi)$ & linear & $R+ T/N$\\
\hline
DBoost 2 &  $\Pi$ & quadratic &  $R+T(1-\frac{\alpha}{\beta})^N$ \\
\hline
\end{tabular}
\caption{\textbf{Main results summary.} Boosting uses $N$ weak controllers which have low regret $R =o(T)$, against a reference class of predictors $\Pi$. The \textit{DynaBoost1} algorithm allows to compete with
the best committee (convex combination) of weak controllers conv$(\Pi)$. \textit{DynaBoost2}, which is more efficient (requires smaller $N$), suited for losses that are $\alpha$-strongly convex and $\beta$-smooth.
\textit{DynaBoost1} and \textit{DynaBoost2} assume weak controller guarantees hold w.r.t. linear and quadratic losses, respectively.
}
\label{table:result_summary}
\end{table}

 Although the second method requires stronger assumptions, its advantage is that it is more efficient in terms of the number of copies $N$ of weak controllers required to achieve low regret. 


\subsection{\textit{DynaBoost1}: Boosting Online Control}\label{subsec:linear_loss_boosting}
Consider the non-stochastic control setting described in Subsection \ref{subsec:NSC}, for a dynamical system as defined in Equation \ref{eq:ds}.
DynaBoost1 is presented as Algorithm \ref{alg1} and assumes an oracle access to a weak controller, which is defined as follows:
\begin{definition}\label{def:weak_controller}
Let $\mathcal{A}_i$ be an online learning algorithm for a dynamical system as defined in Equation \ref{eq:ds} and a reference policy class $\Pi$. The learner $\mathcal{A}_i$ is a \textbf{weak controller} with respect to a class of loss functions $\mathcal{L}$ if 
\begin{enumerate}
    \item The sequence of actions produced by $\mathcal{A}_i$ is of  $(H,\epsilon)$-bounded memory (see Definition \ref{def:stable}).
    \item When run with losses $\ell_t^i$ chosen from the class of loss functions $\mathcal{L}$, it produces a sequence of actions $u_1,...,u_T$ s.t.,
\[
\sum_{t=1}^T \ell_t^i(u_1,...,u_t) - \min_{\pi \in \Pi}\sum_{t=1}^T \ell_t^i(u_1^{\pi},...,u_t^{\pi}) \le R(T).
\]
where action $u_t^{\pi}$ is obtained by applying $\pi \in \Pi$ the best policy in hindsight, and the regret $R(T)$ is a non-decreasing sub-linear function of the horizon $T$. 
\end{enumerate}
\end{definition}
 We can now construct the proxy \textit{linear} cost functions $\ell_t^i(u_{t-H+1}, \ldots, u_t)$ which only consider the $H$ most recent rounds (see line 11 of Algorithm \ref{alg1}), thus obtaining the following regret guarantee,
\begin{align}\label{eq:weak_control_regret}
    \sum_{t=1}^T \ell_t^i  (u_{t-H+1}, ..., u_t) - & \min_{\pi \in \Pi} \sum_{t=1}^T \ell_t^i(u_{t-H+1}^{\pi}, ... ,u_t^{\pi}) \notag \\ 
        & \qquad \le \quad R(T) + 2T\epsilon.
\end{align}
Before stating our main theorem, we need the following definition.  We say that a loss function $\ell$ is $\beta$-smooth if for all $u_1, \ldots, u_H$ and $\tilde{u}_1, \ldots ,\tilde{u}_H$, it holds that,
\begin{align}\label{ass:smooth_loss}
    & \ell(u_1, \ldots, u_H)  - \ell(\tilde{u}_1, \ldots ,\tilde{u}_H) \qquad \\
    &\le \sum_{j=1}^H \nabla_j \ell(\tilde{u}_1, \ldots ,\tilde{u}_H)^{\top}(u_j - \tilde{u}_j)+\frac{\beta}{2}\sum_j \lVert u_j-\tilde{u}_j \lVert_2^2 \notag 
\end{align}

\begin{algorithm}[t]
\caption{DynaBoost 1}
\label{alg1}
\begin{algorithmic}[1]
\STATE Maintain $N$ weak learners $\mathcal{A}_1$,...,$\mathcal{A}_N$. \\
\STATE Set step length $\eta_i=\frac{2}{i+1}$ for $i \in [N]$.
\FOR{$t = 1, \ldots, T$}
\STATE Receive the state $x_t$.
\STATE Define $u_t^0=\mathbf{0}$.
\FOR{$i=1$ to $N$}
\STATE Define $u_t^i=(1-\eta_i)u_t^{i-1}+\eta_i \mathcal{A}_i(x_t)$.
\ENDFOR
\STATE Output action $u_t=u_t^N$.
\STATE Receive loss $\ell_t$, suffer $\ell_t(u_1, \ldots , u_t)$.
\STATE Define linear loss function:
$$\ell_t^i(\mathbf{u}_1...\mathbf{u}_H) \defeq \sum_{j=1}^\hor \nabla_j^\top  \mathbf{u}_j$$
where, $\nabla_j := \nabla_{t-H+j} \ell_t(\mathbf{0},u_{t-H+1}^{i-1},...,u_t^{i-1})$.
\STATE Pass loss function $\ell_t^i(\cdot)$ to weak controller $\mathcal{A}_i$.
\ENDFOR
\end{algorithmic}
\end{algorithm}

Under these assumptions we can now give our main theorem, providing a regret bound for Algorithm \ref{alg1}. 
\begin{theorem} \label{thm:main1}
Let $\mathcal{L}'$ be the class of $\beta$-smooth loss functions. Assume oracle access to $N$ copies of a weak controller $\mathcal{A}$ (see Definition \ref{def:weak_controller}) satisfying Equation \ref{eq:weak_control_regret}. Let $D_{\mathcal{U}}$ be the diameter of the action set  $\mathcal{U}$. Then, there exists a boosting algorithm (Algorithm \ref{alg1}) which produces a sequence of actions $u_t$ for which the following regret bound holds with respect to the reference class $\mathrm{conv}(\Pi)$,
\begin{align*}
     \sum_{t=1}^T \ell_t (u_1,...,u_t) - &\min_{\pi \in \mathrm{conv}(\Pi)} \sum_{t=1}^T \ell_t(u_1^{\pi},...,u_t^{\pi}) \\ \\
    &\le \frac{2\beta D_{\mathcal{U}}^2 H T}{N}+ R(T) + 4T\epsilon. 
\end{align*}
\end{theorem}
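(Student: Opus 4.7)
My plan is to combine three standard ingredients: (i) a bounded-memory reduction that replaces the true costs $c_t$ by the proxy losses $\ell_t$ of Eq.~\eqref{eq:proxy_cost}; (ii) the weak-controller guarantee applied to the linear losses $\ell_t^i$ constructed on line 11 of Algorithm~\ref{alg1}; and (iii) a Frank-Wolfe-style potential argument telescoping over the $N$ boosting stages. The overall structure mirrors the online-gradient-boosting proof of \cite{beygelzimer2015online}, with the memory window requiring extra care.

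Step 1 (reduction to proxies). By Assumption~\ref{ass:convex-stab}, the booster's sequence $\{u_t\}$ is $(H,\epsilon)$-memory bounded since each $u_t$ is a convex combination of weak-controller actions, and the same holds for the competitor sequence $\{u_t^{\pi^*}\}$ for any $\pi^* \in \mathrm{conv}(\Pi)$. Applying Definition~\ref{def:stable} on both sides loses $4T\epsilon$ total, so it suffices to bound $\Phi_N := \sum_t \ell_t(u_{t-H+1}^N,\ldots,u_t^N) - \sum_t \ell_t(u_{t-H+1}^{\pi^*},\ldots,u_t^{\pi^*})$.

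Step 2 (weak learner on linear proxies). Fix $i$ and consider the linear loss $\ell_t^i(\mathbf{u}_1,\ldots,\mathbf{u}_H)=\sum_j \nabla_j^\top \mathbf{u}_j$ constructed from the $i$-th iterate. Linearity ensures that, for any $\pi^*=\sum_k \alpha_k \pi_k \in \mathrm{conv}(\Pi)$, $\sum_t \ell_t^i(u^{\pi^*})=\sum_k \alpha_k \sum_t \ell_t^i(u^{\pi_k}) \ge \min_{\pi\in\Pi}\sum_t \ell_t^i(u^{\pi})$. Hence Definition~\ref{def:weak_controller} applied in the memory form \eqref{eq:weak_control_regret} yields
\[
\sum_{t}\ell_t^i\bigl(\mathcal{A}_i(x_{t-H+1}),\ldots,\mathcal{A}_i(x_t)\bigr)-\sum_t \ell_t^i\bigl(u_{t-H+1}^{\pi^*},\ldots,u_t^{\pi^*}\bigr) \le R(T)+2T\epsilon.
\]

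Step 3 (one-step recurrence). Using $u_s^i - u_s^{i-1} = \eta_i(\mathcal{A}_i(x_s)-u_s^{i-1})$, $\beta$-smoothness \eqref{ass:smooth_loss} gives
\[
\ell_t(u^i) - \ell_t(u^{i-1}) \le \eta_i \sum_{j} \nabla_j^\top\bigl(\mathcal{A}_i(x_{t-H+j})-u_{t-H+j}^{i-1}\bigr) + \tfrac{\beta\eta_i^2}{2}\sum_j \|\mathcal{A}_i(x_{t-H+j})-u_{t-H+j}^{i-1}\|^2.
\]
Summing over $t$, the first sum equals $\eta_i[\sum_t \ell_t^i(\mathcal{A}_i\text{'s actions})-\sum_t \ell_t^i(u^{i-1}\text{'s actions})]$; the second is bounded by $\eta_i^2 \beta D_{\mathcal{U}}^2 H T/2$ using the diameter bound on $\mathcal{U}$. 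Inserting Step~2 and then convexity of $\ell_t$, in the form $\ell_t^i(u^{\pi^*})-\ell_t^i(u^{i-1})=\sum_j \nabla_j^\top(u^{\pi^*}-u^{i-1})_j \le \ell_t(u^{\pi^*})-\ell_t(u^{i-1})$, yields the recurrence
\[
\Phi_i \le (1-\eta_i)\Phi_{i-1} + \eta_i\bigl(R(T)+2T\epsilon\bigr) + \tfrac{\beta\eta_i^2 D_{\mathcal{U}}^2 H T}{2}.
\]

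Step 4 (telescoping). With $\eta_i = 2/(i+1)$, multiplying the recurrence by $i(i+1)$ makes the $\Phi$ terms telescope after summing $i=1,\ldots,N$, producing $\Phi_N \le R(T)+2T\epsilon+\tfrac{2\beta D_{\mathcal{U}}^2 H T}{N+1}$. Combined with the $4T\epsilon$ slack of Step~1, this gives the theorem.

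The main obstacle I expect is the bookkeeping in Step~3: the booster's linear loss $\ell_t^i$ is defined on an $H$-window, so the weak learner's action at a single time $s$ contributes to $\ell_s^i,\ell_{s+1}^i,\ldots,\ell_{s+H-1}^i$, and one must verify that the per-round decomposition really aggregates into the cumulative linear-loss regret used by the weak-controller guarantee. Everything else (smoothness, Frank-Wolfe telescoping, the reduction) is mechanical once this indexing is nailed down.
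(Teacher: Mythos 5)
Your proposal follows essentially the same route as the paper's proof: the same reduction to truncated-memory proxy losses, the same use of linearity to collapse the minimum over $\mathrm{conv}(\Pi)$ to a minimum over $\Pi$, the same smoothness-plus-convexity recurrence $\Phi_i \le (1-\eta_i)\Phi_{i-1} + \eta_i(R(T)+2T\epsilon) + \eta_i^2\beta D_{\mathcal{U}}^2 H T/2$, and a Frank--Wolfe-style telescoping (the paper closes it by induction, you by the $i(i+1)$ summation trick, which are equivalent). The one slip is the $\epsilon$ bookkeeping in Step 1: truncating both the booster's and the comparator's sequences costs $2T\epsilon$ (one $T\epsilon$ per sequence), not $4T\epsilon$, and adding your claimed $4T\epsilon$ to the $2T\epsilon$ already carried inside the recurrence would yield $6T\epsilon$ rather than the stated $4T\epsilon$.
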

The proof is given in Section \ref{sec:analysis}.

\subsection{\textit{DynaBoost2}: Fast-Boosting Online Control} \label{subsec:quad_loss_boosting}

We now present our results for the case when the loss functions we compete with are strongly convex. In this case we prove that the excess regret of boosting goes down exponentially in the number of weak learners. The weak learners required for this result are stronger in the sense that they are able to have low regret against quadratic functions as opposed to linear functions in the previous part. Due to this, the boosted algorithm does not compete with an expanded class of predictors but rather just with the original class of predictors $\Pi$.

In addition to assumptions in the previous subsection we will need the following additional assumptions.
We say a loss function $\ell$ is $\alpha$-strongly convex when for all $u_1, \ldots, u_H$ and $\tilde{u}_1, \ldots ,\tilde{u}_H$,
\begin{align}\label{ass:alpha_strong}
    & \ell(u_1, \ldots, u_H)  - \ell(\tilde{u}_1, \ldots ,\tilde{u}_H) \qquad \\
    &\ge \sum_{j=1}^H \nabla_j \ell(\tilde{u}_1, \ldots ,\tilde{u}_H)^{\top}(u_j - \tilde{u}_j)+\frac{\alpha}{2}\sum_j \lVert u_j-\tilde{u}_j \lVert_2^2 \notag
\end{align}
Furthermore we say $\ell$ is $G$-bounded if for all $u_1, \ldots, u_H \in \cal{U}$ we have that $|\ell(u_1, \ldots, u_H)| \leq G$.

Under these assumptions we can now give our main theorem, providing a regret bound for Algorithm \ref{alg2}. \\

\begin{theorem} \label{thm:main2}
Let $\mathcal{L}'$ be the class of $\alpha$ strongly convex and $G$-bounded loss functions. Assume oracle access to $N$ copies of a weak controller $\mathcal{A}$ (see Definition \ref{def:weak_controller}) satisfying Equation \ref{eq:weak_control_regret} with respect to the class $\mathcal{L}$ of $\alpha$-strongly convex quadratic functions. Then, there exists a boosting algorithm (Algorithm \ref{alg2}) which produces a sequence of actions $u_t$ for which the following regret bound holds with respect to the reference class $\Pi$,
\begin{align*}
     \sum_{t=1}^T \ell_t (u_1,...,u_t) - &\min_{\pi \in \Pi} \sum_{t=1}^T \ell_t(u_1^f,...,u_t^f) \\ \\
    &\le (1-\frac{\alpha}{\beta})^N 2GT + R(T) + 4T\epsilon. 
\end{align*}
\end{theorem}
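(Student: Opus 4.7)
The plan is to mimic the analogous exponential-contraction argument from online gradient boosting with strongly convex losses \cite{beygelzimer2015optimal}, adapted to the memory-bounded loss-with-memory setting. As in Theorem \ref{thm:main1}, the first step is to pass from the true costs $c_t$ to the $H$-truncated proxy costs $\ell_t$ of Equation \ref{eq:proxy_cost}; bounded memory costs us an additive $2T\epsilon$ on both sides of the regret (for the algorithm's actions and for the comparator), giving the $4T\epsilon$ term. All subsequent analysis can therefore be carried out purely against $\ell_t$, which is $\alpha$-strongly convex, $\beta$-smooth and $G$-bounded.

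The core of the proof is a layer-by-layer contraction on the suboptimality gap. Fix any $\pi^\star \in \Pi$ with actions $u_t^\star$ and, following the algorithm, let $u_t^i$ be the intermediate iterate after the $i$-th weak controller contributes, with $u_t^0 = \mathbf{0}$ and $u_t^N$ the played action. Define
\begin{equation*}
    \Phi_i \;\defeq\; \sum_{t=1}^T \Bigl(\ell_t(u_{t-H+1}^{i},\ldots,u_t^{i}) - \ell_t(u_{t-H+1}^{\star},\ldots,u_t^{\star})\Bigr).
\end{equation*}
The design of DynaBoost2 is that $\mathcal{A}_i$ is fed the quadratic residual loss obtained from the $\beta$-smoothness upper bound at $u^{i-1}$, namely
\begin{equation*}
    \ell_t^i(u_1,\ldots,u_H) \;\defeq\; \sum_{j=1}^H \nabla_j^{\top}(u_j - u_{t-H+j}^{i-1}) + \tfrac{\beta}{2}\sum_{j=1}^H \|u_j - u_{t-H+j}^{i-1}\|^2,
\end{equation*}
with $\nabla_j$ the $j$-th partial gradient of $\ell_t$ evaluated at $u^{i-1}$. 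This belongs to the class $\mathcal{L}$ of $\alpha$-strongly convex quadratics, so the weak-controller guarantee (Equation \ref{eq:weak_control_regret}) applies.

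The key estimate proceeds in two halves. First, $\beta$-smoothness gives $\ell_t(u^i) \leq \ell_t(u^{i-1}) + \ell_t^i(u^i)$, so summing over $t$ and using the weak controller's regret guarantee against any fixed policy's play $u^\pi$ yields
\begin{equation*}
    \sum_{t=1}^T \ell_t(u^i) \;\leq\; \sum_{t=1}^T \ell_t(u^{i-1}) + \sum_{t=1}^T \ell_t^i(u^\pi) + R(T) + 2T\epsilon.
\end{equation*}
Second, $\alpha$-strong convexity gives a matching quadratic lower bound on $\ell_t$, so for the particular choice $u^\pi_t = u^{i-1}_t + \tfrac{\alpha}{\beta}(u^\star_t - u^{i-1}_t)$ — which lies in the action set by convexity of $\mathcal{U}$ but is not necessarily a policy in $\Pi$ — a direct computation of the quadratic $\ell_t^i$ evaluated at this convex combination yields $\ell_t^i(u^\pi) \le -\tfrac{\alpha}{\beta}\bigl(\ell_t(u^{i-1}) - \ell_t(u^\star)\bigr)$. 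Since the weak controller competes against the \emph{best} policy in $\Pi$, and since strong convexity lets us take this same convex combination inside the minimization over $\Pi$ (this is where the assumption that the algorithm competes against all of $\Pi$, and not conv$(\Pi)$, matters), combining the two halves gives the recursion
\begin{equation*}
    \Phi_i \;\leq\; \bigl(1 - \tfrac{\alpha}{\beta}\bigr)\Phi_{i-1} + R(T) + 2T\epsilon.
\end{equation*}

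Iterating from $i=N$ down to $i=0$ and using $\Phi_0 \leq 2GT$ (from the $G$-boundedness of $\ell_t$) produces
\begin{equation*}
    \Phi_N \;\leq\; \bigl(1-\tfrac{\alpha}{\beta}\bigr)^N 2GT + \tfrac{\beta}{\alpha}\bigl(R(T) + 2T\epsilon\bigr),
\end{equation*}
which, after absorbing constants into $R(T)$ and combining with the memory-reduction slack, matches the stated bound. The main obstacle I anticipate is the second half of the key estimate: namely, justifying that $-\tfrac{\alpha}{\beta}\Phi_{i-1}$ is actually achievable by the minimum over $\Pi$ of $\sum_t \ell_t^i$. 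The cleanest way is to use strong convexity of $\ell_t$ to express its gap as a quadratic and then recognize the minimizer of the residual quadratic as a $\tfrac{1}{\beta}$-sized step toward $u^\star$; one must be careful that the weak-learner comparator class is rich enough to realize this step, which is why the theorem restricts $\mathcal{L}$ for the weak learner to strongly-convex quadratics rather than just linear losses.
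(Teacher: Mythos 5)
Your overall architecture is right --- reduce to the truncated losses (paying $4T\epsilon$), set up a per-layer gap $\Phi_i$, derive a contraction $\Phi_i \le (1-\alpha/\beta)\Phi_{i-1} + (\text{regret terms})$, and unroll from $\Phi_0 \le 2GT$ --- and this is the same skeleton the paper uses. But the step you yourself flag as the ``main obstacle'' is a genuine gap, and your proposed resolution does not work. You define the residual loss with quadratic coefficient $\tfrac{\beta}{2}$, and to certify $\min_{\pi\in\Pi}\sum_t \ell_t^i(u^\pi) \le -\tfrac{\alpha}{\beta}\Phi_{i-1}$ you evaluate $\ell_t^i$ at the point $u^{i-1}+\tfrac{\alpha}{\beta}(u^\star-u^{i-1})$. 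That point is a mixture of the booster's own iterate and the comparator's action; it is not the action sequence of any policy in $\Pi$, so the weak learner's regret guarantee gives you no control over it. Saying that ``strong convexity lets us take this convex combination inside the minimization over $\Pi$'' is not a valid step --- $\Pi$ is not assumed closed under mixing with arbitrary trajectories, and indeed plugging the true comparator $u^\star$ directly into \emph{your} residual loss only yields $\ell_t^i(u^\star)\le \ell_t(u^\star)-\ell_t(u^{i-1})+\tfrac{\beta-\alpha}{2}\sum_j\|u_j^\star-u_j^{i-1}\|^2$, whose extra positive term kills the contraction.

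The paper closes exactly this hole by a different design of the residual loss: Algorithm \ref{alg2} uses quadratic coefficient $\tfrac{\eta_i\beta}{2}=\tfrac{\alpha}{2}$, not $\tfrac{\beta}{2}$. With that scaling, $\alpha$-strong convexity applied \emph{directly at the true comparator's actions} $u^f$ (which are realizable by $\pi^\star\in\Pi$) gives $\sum_t\ell_t^i(u^f)\le -\Delta_{i-1}$ (Equation \ref{eq:shalom}), and $\beta$-smoothness gives $\ell_t(u^i)-\ell_t(u^{i-1})\le \eta_i\,\ell_t^i(\mathcal{A}_i(\cdot))$, so the weak-learner regret enters multiplied by $\eta_i$. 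This also fixes a second, quantitative discrepancy in your writeup: your recursion $\Phi_i\le(1-\tfrac{\alpha}{\beta})\Phi_{i-1}+R(T)+2T\epsilon$ omits the $\eta_i$ weight on the regret term, so unrolling yields $\tfrac{\beta}{\alpha}(R(T)+2T\epsilon)$ rather than $R(T)+2T\epsilon$; a multiplicative $\beta/\alpha$ cannot be ``absorbed into $R(T)$'' and does not match the stated bound. With the paper's $\eta_i$-weighted recursion, $\Delta_N$ is a convex combination of $\Delta_0$ and $R(T)+2T\epsilon$, which gives the theorem exactly.
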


\begin{algorithm}[t]
\caption{DynaBoost 2}
\label{alg2}
\begin{algorithmic}[1]
\STATE Maintain $N$ weak learners $\mathcal{A}_1$,...,$\mathcal{A}_N$. \\
\STATE Set step length $\eta_i=\frac{\alpha}{\beta}$ for $i \in [N]$.
\FOR{$t = 1, \ldots, T$}
\STATE Receive the state $x_t$.
\STATE Define $u_t^0=\mathbf{0}$.
\FOR{$i=1$ to $N$}
\STATE Define $u_t^i=(1-\eta_i)u_t^{i-1}+\eta_i \mathcal{A}_i(x_t)$.
\ENDFOR
\STATE Output action $u_t=u_t^N$.
\STATE Receive loss $\ell_t$, suffer $\ell_t(u_1, \ldots , u_t)$.
\STATE Define quadratic loss function 
\[
\ell_t^i(\mathbf{u}_1...\mathbf{u}_H) \defeq \sum_{j=1}^H \frac{\eta_i\beta}{2}\|\mathbf{u}_j -u_{t-H+j}^{i-1}\|^2 \quad +\qquad \qquad \qquad 
\]
\[
\sum_{j=1}^{\hor} \left( \nabla_j^\top  (\mathbf{u}_j-u_{t-H+j}^{i-1})\right).
\]
\STATE where, $\nabla_j := \nabla_{t-H+j} \ell_t(\mathbf{0},u_{t-H+1}^{i-1},...,u_t^{i-1})$.
\STATE Pass loss function $\ell_t^i(\cdot)$ to weak controller $\mathcal{A}_i$.
\ENDFOR
\vskip -0.3in
\end{algorithmic}
\end{algorithm}

We provide the proof of Theorem \ref{thm:main1} next. The proof of Theorem \ref{thm:main2} which follows a similar argument is deferred to the Appendix. 

\section{Proof of Theorem \ref{thm:main1}} \label{sec:analysis}

\begin{proof}
First, note that for any $i=1,2 \ldots N$, since $\ell_t^i \in \mathcal{L}$, the loss function encountered by the weak controller (defined in Line 11 of \ref{alg1}), is a linear function, we have that:
\begin{align*}
\min_{\pi \in \mathrm{conv}(\Pi)} \sum_{t=1}^T & \ell_t^i(u_{t-H+1}^{\pi},...,u_t^{\pi}) \\
& = \min_{\pi \in\Pi} \sum_{t=1}^T \ell_t^i(u_{t-H+1}^{\pi},...,u_t^{\pi})
\end{align*}
Now let $\pi$ be any function in $\mathrm{conv}(\Pi)$. Observe that by the equality above and the regret bound of the weak controller (Equation \ref{eq:weak_control_regret}), we get,
\begin{align}\label{eq:pf_weak_control_regret}
    \sum_{t=1}^T \bigg(\ell_t^i (u_{t-H+1}, ..., u_t) - & \ell_t^i(u_{t-H+1}^{\pi}, ... ,u_t^{\pi})\bigg) \notag \\
        & \quad \le  R(T) +  2T\epsilon.
\end{align}
Denote $j^-= t-j+1$ for brevity. Define for any $i \in [N]$, $t \in [T]$, and any $\ell_t \in \mathcal{L}'$ loss function encountered by the booster,
\[
\Delta_{t,i} \defeq \ell_t (\mathbf{0},u^i_{H^-}, ..., u^i_t) - \ell_t(\mathbf{0},u_{H^-}^{\pi}, ... ,u_t^{\pi}).
\]
Consider the following calculations for $\Delta_{t,i}$:

\begin{align*}
    &\Delta_{t,i} = \ell_t\bigg(\mathbf{0},u_{H^-}^{i{-}1}{+} \eta_i(\mathcal{A}_i(x_{H^-}) - u_{H^-}^{i-1}), \ldots, \\
    & \qquad  \quad u_t^{i-1}{+} \eta_i(\mathcal{A}_i(x_t) - u_t^{i-1})\bigg) - \ell_t(\mathbf{0},u_{H^-}^{\pi},...,u_t^{\pi}) \\
    &  \qquad (\text{by substituting $u^i_t$ as in line 7 of Algorithm \ref{alg1}})\\
    & \qquad \leq \ell_t(\mathbf{0},u^{i-1}_{H^-}, ..., u^{i-1}_t) - \ell_t(\mathbf{0},u_{H^-}^{\pi}, ... ,u_t^{\pi})  + \\ 
    &\qquad \quad \sum_{j=1}^{\hor} \bigg( \eta_i \nabla_j^{\top} (\mathcal{A}_i(x_{t-H+j}) - u_{t-H+j}^{i-1}){+}\\
    & \qquad \qquad \quad \frac{\eta_i^2 \beta}{2} \|\mathcal{A}_i(x_{t-H+j}) - u_{t-H+j}^{i-1}\|^2 \bigg)  \\
    &  \quad (\text{by convexity and $\beta$-smoothness of $\ell_t$,} \\
    & \qquad \text{ and definition of $\nabla_j$ (line 11, Algorithm \ref{alg1})})
\end{align*}
Denote $\Delta_i = \sum_t \Delta_{t, i}$. Then, by summing over $t \in [T]$, and applying the weak-controller regret bound (Equation \ref{eq:pf_weak_control_regret}), we have,
\begin{align*}
    &\Delta_{i} \leq \sum_{t=1}^{T} \Bigg( \left(\ell_t(\mathbf{0},u^{i-1}_{H^-}, ..., u^{i-1}_t) - \ell_t(\mathbf{0},u_{H^-}^{\pi}, ... ,u_t^{\pi})\right) + \\ 
    & \qquad \quad \eta_i \sum_{j=1}^{\hor} \nabla_j^{\top}(u_{t-H+j}^{\pi}{-}u_{t-H+j}^{i-1}) \Bigg){+}\\
    & \qquad \quad \eta_i (R(T){+}2T\epsilon){+}\frac{\eta_i^2 \beta D_{\mathcal{U}}^2 \hor T}{2}  \\
    &\quad \leq (1 - \eta_i)\Delta_{i-1} + \eta_i (R(T){+}2T\epsilon){+}\frac{\eta_i^2 \beta D_{\mathcal{U}}^2 \hor T}{2}
\end{align*}

where we used the bound $\|\mathcal{A}_i(x_{t-H+j}) - u_{t-H+j}^{i-1}\|^2 \le 2D_{\mathcal{U}}$.
For $i=1$, since $\eta_1 = 1$, the above bound implies that $\Delta_1 \leq \frac{\beta D_{\mathcal{U}}^2 \hor T}{2} + (R(T) +  2T\epsilon) $. Starting from this base case, by induction on $i \geq 1$ it follows that $\Delta_i \leq \frac{2\beta D_{\mathcal{U}}^2 \hor T}{i} + (R(T) +  2T\epsilon) $. Applying the above bound for $i=N$ yields the desired result for truncated memory losses. Lastly, using Assumption \ref{ass:convex-stab} completes the proof. 
\end{proof}

\section{Case Studies} \label{sec:boosting_control}

For the sake of clarity, we precisely spell out the application of our boosting algorithms with two choices of weak learning methods to illustrate the general technique of applying our boosting algorithm. 

\subsection{Boosting Deep Controllers} \label{subsec:case_study_rnn}

Consider a controller based on a Recursive Neural Network(RNN) for the non-stochastic control problem with dynamics \eqref{eq:ds}. As motivated earlier we explicitly enforce the $H$-memory bounded property via the choice of the sequence length of the RNN. Formally, the weak learners in this setting are deep neural networks $\mathrm{RNN}_{\theta}$ that map a sequence of $H$ past perturbations $w_{t:t-H} = w_t,...,w_{t-H}$ to control:
$$ u_{t+1} = \mathrm{RNN}_{\theta}(w_{t-H:t}) . $$
Here by $\theta$ we denote the internal weights of the network. 

When used inside Algorithm \ref{alg1}, each weak leaner $\mathrm{RNN}_{\theta} = \A_i$ is an instance of neural net that is initialized arbitrarily. Iteratively, the network $\mathrm{RNN}_{\theta}$ receives $w_t$ and predicts $u_{t+1}^i$ using a sequential feed forward computation over $w_{t-H:t}$.  It then receives the residual loss function $\ell_t^i(\cdot)$. 
It then applies the back-propagation algorithm to update its internal weights.

\subsection{Boosting for Linear Dynamical Systems} \label{subsec:case_study_gpc}

A linear dynamical system is governed by the dynamics equation
\begin{equation}\label{eqn:lds}
    x_{t+1} = Ax_t + Bu_t + w_t,  
\end{equation}
The system is assumed to be known and strongly stable(See Definition 3.3 in \cite{agarwal2019online}). We use the controller presented in \cite{agarwal2019online}(referred to as Gradient Perturbation Controller (GPC)) as the weak learners. The GPC controller parameterizes the control actions $u_t$ via the following equation:
\begin{equation}\label{eqn:controller}
    u_t=-K x_t+\sum_{i=1}^H M^i w_{t-i}
\end{equation}
where $K$ is a fixed pre-computed matrix (depending only on $A,B$) and $M=(M^1,\ldots M^H)$ are parameters governing the controller over which the controller learns. As shown in \cite{agarwal2019online}, $K$ can be selected such that the strong stability property of the system implies that the actions $u_t$ are $\left(O\left(\log(T/\epsilon)\right),\epsilon \right)$-bounded memory (see Theorem 5.3 in \cite{agarwal2019online}). Furthermore it can be easily checked that the actions also satisfy Assumption \ref{ass:convex-stab}. 

Having setup the weak controller thus we feed it inside Algorithm \ref{alg1}. Similar to the setting with the deep networks, iteratively, the controller recieves $w_t$ and predicts $u_{t+1}^i$ using the GPC prediction. Furthermore, it then receives the residual loss function $\ell_t^i(\cdot)$ and the internal parameters are updated according to the GPC update.

\section{Experiments} \label{sec:experiments}

\begin{figure*}[t]
  \includegraphics[width=\textwidth]{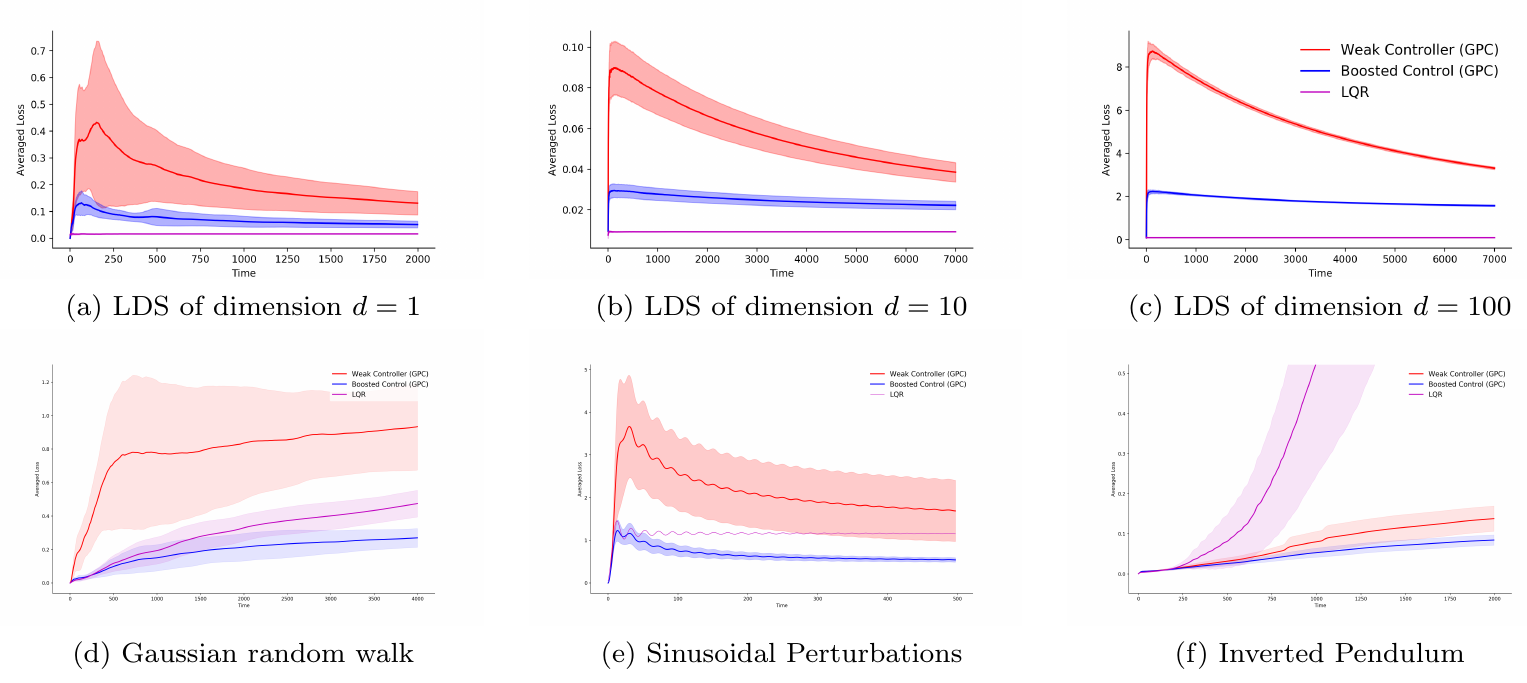}
    \caption{Online Boosting Control with GPC weak-controllers}%
    \label{fig:plots1}%
\end{figure*}
\begin{figure*}[t] \centering
  \includegraphics[width=13cm]{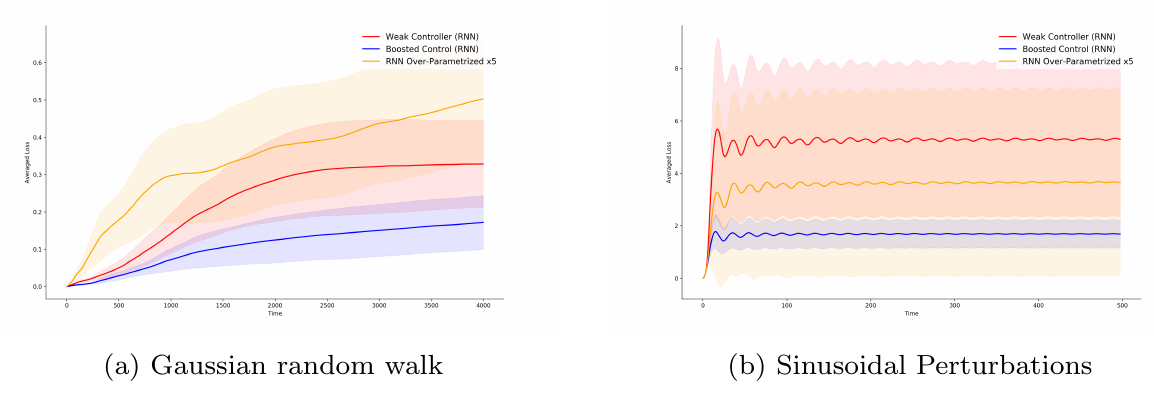}
    \caption{Online Boosting Control with RNN weak-controllers}%
    \label{fig:plots2}%
\end{figure*}
We have tested our framework of online boosting given in Algorithm \ref{alg1} in various control settings, as detailed below. 

The first weak-controller we have tested is the Gradient Perturbation Controller (GPC) discussed above (see Subsection \ref{subsec:case_study_gpc}), presented in Figure \ref{fig:plots1}. In addition, we also give results for a RNN-based controller (see Subsection \ref{subsec:case_study_rnn}), presented in Figure \ref{fig:plots2}. The weak-controller baselines, and the weak controllers fed to the boosting method, are the exact same controllers, with identical configuration per setting. Note that in all settings, weak-controllers performance (plotted in red) can be improved by applying boosting (plotted in blue). 

We begin with experiments on a Linear Dynamical System (as in Equation \ref{eqn:lds}) where the matrices $A,B$ are generated randomly. We then present experiments for a non-linear dynamics as well (Inverted Pendulum setting). The cost function used in all settings is $c(x,u) = \left\lVert x\right\rVert_2^2 + \left\lVert u\right\rVert_2^2$. The GPC weak-controller is designed as in Equation \ref{eqn:controller}, following \cite{agarwal2019online}, with the pre-fixed matrix $K$ set to $0$. The RNN weak-controller, using an LSTM architecture, with $5$ hidden units. In all figures, we plot the averaged results for a fixed system, which differs per setting, over $20$ experiment runs with different stochasticity. Confidence intervals of $.95$ are plotted in each setting as well. 

\paragraph{Sanity check experiments.}
To demonstrate the effectiveness of the system in terms of both (i) its ability to reach close to a known optimal controller, and (ii) its performance in different dimensions, we present the results of this setting, shown in the first row of Figure \ref{fig:plots1}. 
For the system used in each dimension $d \in \{1,10,100\}$ (with $d=k$ in all settings), each noise term $w_t$ is normally i.i.d. distributed with zero mean, and $0.1^2$ variance. We set the memory length to $H=5$, and use $N=5$ weak-learners in all the experiments. The Linear Quadratic Regulator (LQR) is known to be optimal in this setting and therefore this experiment only serves as a sanity check. 

\paragraph{Correlated disturbances experiments.}
We now consider more challenging LDS settings in which the disturbances $w_t$ are correlated across time.
In the "Gaussian random walk" setting, each noise term is distributed normally, with the previous noise term as its mean (specifically, $w_{t+1} \sim \mathcal{N}(w_t, 0.3^2)$), and is clipped to the range $[-1, 1]$. 
In the "Sinusoidal Perturbations" setting, the sine function is applied to the time index, such that, $w_t = \sin(t)/ 2\pi$. 

Note that in these settings the LQR method is no longer optimal due to perturbations being correlated across time. 
The RNN-based controllers perform better than GPC-based controllers in the "Gaussian random walk" setting, whereas in the "Sinusoidal Perturbations" setting, GPC outperforms RNNs. However, in both cases, Boosting improves upon its corresponding weak-controller. 

\paragraph{Boosting vs. Over-Parameterization} In Figure \ref{fig:plots2}, the "Over-parametrized RNN" baseline refers to a baseline controller of the same architecture and hyper-parameters as the RNN-weak controller, but with a larger hidden layer. We demonstrate that by using a larger network with overall same number of parameters as the boosted RNN controller, boosting achieves superior performance. Notice that enlarging the size of the network might result in a controller that is outperformed even by the smaller RNN controller, as in Figure \ref{fig:plots2}(a). 
Overall, this experiment implies that the strength of our method does not stem from using more parameters, but rather from the way in which the weak-controllers are maintained by the boosting framework.

\paragraph{Inverted Pendulum experiment.}
The inverted pendulum, 
a highly nonlinear unstable system, 
is a commonly used benchmark for control methods. The objective of the control system is to balance the inverted pendulum by applying torque that will stabilize it in a vertically upright position. 
Here we follow the dynamics that was implemented in \cite{openAI}. The LQR baseline solution is obtained from the linear approximation of the system dynamics, whereas our baseline and boosted controllers are not restricted to that approximation. We add correlated disturbances obtained from a Gaussian random walk, as above, such that $w_t \sim \mathcal{N}(w_{t-1}, \text{5e-3})$, where the noise values are then clipped to the range $[-0.5, 0.5]$.

\section{Conclusions}
We have described a framework for boosting of algorithms that have state information, and two efficient algorithms that provably enhance weak learnability in different ways. These can be applied to a host of control problems in dynamical systems. Preliminary experiments  in simulated control look promising, of boosting for both linear and deep controllers.



\bibliography{bib.bib}

\begin{thebibliography}{10}

\bibitem{agarwal2019online}
Naman Agarwal, Brian Bullins, Elad Hazan, Sham~M Kakade, and Karan Singh.
\newblock Online control with adversarial disturbances.
\newblock {\em arXiv preprint arXiv:1902.08721}, 2019.

\bibitem{anava2015online}
Oren Anava, Elad Hazan, and Shie Mannor.
\newblock Online learning for adversaries with memory: price of past mistakes.
\newblock In {\em Advances in Neural Information Processing Systems}, pages
  784--792, 2015.

\bibitem{beygelzimer2015online}
Alina Beygelzimer, Elad Hazan, Satyen Kale, and Haipeng Luo.
\newblock Online gradient boosting.
\newblock In {\em Advances in neural information processing systems}, pages
  2458--2466, 2015.

\bibitem{beygelzimer2015optimal}
Alina Beygelzimer, Satyen Kale, and Haipeng Luo.
\newblock Optimal and adaptive algorithms for online boosting.
\newblock In {\em International Conference on Machine Learning}, pages
  2323--2331, 2015.

\bibitem{cohen2018online}
Alon Cohen, Avinatan Hasidim, Tomer Koren, Nevena Lazic, Yishay Mansour, and
  Kunal Talwar.
\newblock Online linear quadratic control.
\newblock In {\em Proceedings of the 35th International Conference on Machine
  Learning}, pages 1029--1038. PMLR, 2018.

\bibitem{pmlr-v97-cohen19b}
Alon Cohen, Tomer Koren, and Yishay Mansour.
\newblock Learning linear-quadratic regulators efficiently with only $\sqrt{T}$
  regret.
\newblock In Kamalika Chaudhuri and Ruslan Salakhutdinov, editors, {\em
  Proceedings of the 36th International Conference on Machine Learning},
  volume~97 of {\em Proceedings of Machine Learning Research}, pages
  1300--1309, Long Beach, California, USA, 09--15 Jun 2019. PMLR.

\bibitem{Freund:2003:EBA:945365.964285}
Yoav Freund, Raj Iyer, Robert~E. Schapire, and Yoram Singer.
\newblock An efficient boosting algorithm for combining preferences.
\newblock {\em J. Mach. Learn. Res.}, 4:933--969, December 2003.

\bibitem{Freund:1997:DGO:261540.261549}
Yoav Freund and Robert~E Schapire.
\newblock A decision-theoretic generalization of on-line learning and an
  application to boosting.
\newblock {\em J. Comput. Syst. Sci.}, 55(1):119--139, August 1997.

\bibitem{openAI}
Ludwig Pettersson Jonas Schneider John Schulman Jie~Tang Greg~Brockman,
  Vicki~Cheung and Wojciech Zaremba.
\newblock Openai gym, 2016.

\bibitem{hazan2019nonstochastic}
Elad Hazan, Sham~M Kakade, and Karan Singh.
\newblock The nonstochastic control problem.
\newblock {\em arXiv preprint arXiv:1911.12178}, 2019.

\bibitem{kanade2009potential}
Varun Kanade and Adam Kalai.
\newblock Potential-based agnostic boosting.
\newblock In {\em Advances in neural information processing systems}, pages
  880--888, 2009.

\bibitem{mason2000boosting}
Llew Mason, Jonathan Baxter, Peter~L Bartlett, and Marcus~R Frean.
\newblock Boosting algorithms as gradient descent.
\newblock In {\em Advances in neural information processing systems}, pages
  512--518, 2000.

\bibitem{schapire2012boosting}
Robert~E Schapire and Yoav Freund.
\newblock {\em Boosting: Foundations and algorithms}.
\newblock MIT press, 2012.

\bibitem{simchowitz2020improper}
Max Simchowitz, Karan Singh, and Elad Hazan.
\newblock Improper learning for non-stochastic control, 2020.

\end{thebibliography}
\bibliographystyle{plain}

\appendix
\section{Appendix}

\subsection{Proof of Theorem \ref{thm:main2}}

\begin{proof}[Proof of Theorem \ref{thm:main2}]
	Since $\A^i$ satisfies inequality \ref{eq:weak_control_regret}, we have that
	\begin{align}\label{eq:pf_strong_control_regret}
        \sum_{t=1}^T \bigg(\ell_t^i (u_{t-H+1}, ..., u_t) - & \ell_t^i(u_{t-H+1}^f, ... ,u_t^f)\bigg) \notag \\
            & \quad \le  R(T) +  2T\epsilon.
    \end{align}

Denote $j^-= t-j+1$ for brevity. Define for any $i \in [N]$, $t \in [T]$, and any $\ell_t \in \mathcal{L}'$ loss function encountered by the booster,
\[
\Delta_{t,i} \defeq \ell_t (\mathbf{0},u^i_{H^-}, ..., u^i_t) - \ell_t(\mathbf{0},u_{H^-}^f, ... ,u_t^f).
\]

Denote $\Delta_i = \sum_t \Delta_{t, i}$. Notice that by $\alpha$-strongly convexity \ref{ass:alpha_strong} of $\ell_t$, as long as we choose $\eta_i\le \frac{\alpha}{\beta}$, we have

\begin{align*}
    &\ell_t^i(u_{t-H+1}^f, ... ,u_t^f)=\sum_{j=1}^H \frac{\eta_i\beta}{2}\|{u}_{t-H+j}^f -u_{t-H+j}^{i-1}\|^2 +\\
    & \sum_{j=1}^{\hor} \left( \nabla_j^\top  ({u}_{t-H+j}^f-u_{t-H+j}^{i-1})\right) \\
    &  \le \ell_t(\mathbf{0},u_{t-H+1}^f, ... ,u_t^f)-\ell_t(\mathbf{0},u_{t-H+1}^{i-1}, ..., u_t^{i-1}).\\
\end{align*}

Thus by summing them up we get
	\begin{equation} \label{eq:shalom}
		  \sum_{t=1}^T \ell_t^i ( u_{t-H+1}^{\pi}, ... ,u_t^{\pi} )    \leq - \Delta_{i-1}
	\end{equation}

Consider the following calculations for $\Delta_{t,i}$:

\begin{align*}
    \Delta_{t,i} &= \ell_t\bigg(\mathbf{0},u_{H^-}^{i{-}1}{+} \eta_i(\mathcal{A}^i(x_{H^-}) - u_{H^-}^{i-1}), \ldots, \\
    & \quad u_t^{i-1}{+} \eta_i(\mathcal{A}^i(x_t) - u_t^{i-1})\bigg) - \ell_t(\mathbf{0},u_{H^-}^{\pi},...,u_t^{\pi}) \\
    &  (\text{by substituting $u^i_t$ as in line 7 of Algorithm \ref{alg2}})\\
    &\leq \ell_t(\mathbf{0},u^{i-1}_{H^-}, ..., u^{i-1}_t) - \ell_t(\mathbf{0},u_{H^-}^{\pi}, ... ,u_t^{\pi})  + \\ 
    & \quad \sum_{j=1}^{\hor} \bigg( \eta_i \nabla_j^{\top} (\mathcal{A}^i(x_{t-H+j}) - u_{t-H+j}^{i-1}) \\
    &\qquad \qquad  \qquad + \frac{\eta_i^2 \beta}{2} \|\mathcal{A}^i(x_{t-H+j}) - u_{t-H+j}^{i-1}\|^2 \bigg)  \\
    &  (\text{by convexity and $\beta$-smoothness of $\ell_t$})
\end{align*}

By summing $\Delta_{t,i}$ over $t \in [T]$, we have that

\begin{align*}
    \Delta_{i} &\leq \sum_{t=1}^{T} \bigg( \left(\ell_t(\mathbf{0},u^{i-1}_{H^-}, ..., u^{i-1}_t) - \ell_t(\mathbf{0},u_{H^-}^{\pi}, ... ,u_t^{\pi})\right) + \\ 
    & \quad \eta_i \sum_{j=1}^{\hor} \bigg(\nabla_j^{\top} (\mathcal{A}^i(x_{t-H+j}) - u_{t-H+j}^{i-1}) \\
    &\qquad \qquad  \qquad + \frac{\eta_i \beta}{2} \|\mathcal{A}^i(x_{t-H+j}) - u_{t-H+j}^{i-1}\|^2 \bigg) \\
    &= \sum_{t=1}^{T} \bigg( \left(\ell_t(\mathbf{0},u^{i-1}_{H^-}, ..., u^{i-1}_t) - \ell_t(\mathbf{0},u_{H^-}^{\pi}, ... ,u_t^{\pi})\right) + \\
    & \eta_i \ell_t^i(\mathcal{A}^i(x_{H^-}), ... ,\mathcal{A}^i(x_{1^-})) \bigg) \\
    &\leq \sum_{t=1}^{T} \bigg( \left(\ell_t(u^{i-1}_{H^-}, ..., u^{i-1}_t) - \ell_t(u_{H^-}^{\pi}, ... ,u_t^{\pi})\right) + \\
    & \eta_i \ell_t^i((u_{t-H+1}^{\pi}, ... ,u_t^{\pi}) \bigg) +\eta_i (R(T)+2T\epsilon)\\
    &  (\text{by the weak-controller regret bound  \ref{eq:pf_strong_control_regret}}) \\
    &\leq (1 - \eta_i)\Delta_{i-1} + \eta_i (R(T){+}2T\epsilon) \quad (\text{by inequality \ref{eq:shalom}})
\end{align*}
	
Choosing $\eta_i=\frac{\alpha}{\beta}$, then by noticing $\Delta_i$ is always upper bounded by a convex combination of $\Delta_0$ and $(R(T)+2T\epsilon)$ , we have
\begin{align*}
    \Delta_i& \leq \left(1-\frac{\alpha}{\beta}\right)^i\Delta_0+\left(1-\left(1-\frac{\alpha}{\beta}\right)^i\right) (R(T)+2T\epsilon) \\
    &\leq \left(1-\frac{\alpha}{\beta}\right)^i 2GT+ R(T)+2T\epsilon
\end{align*}
plugging $i=N$ in finishes our proof.
\end{proof}

\end{document}